\def \R {\mathbb{R}}
\def \x {\mathbf{x}}
\def \OO {\mathcal{O}}
\def \z {\mathbf{z}}
\def \p {\mathbf{p}}
\def \q {\mathbf{q}}
\def \q {\mathbf{q}}
\def \w {\mathbf{w}}
\newtheorem{thm}{Theorem}
\newtheorem{prop}{Proposition}
\ificcvfinal\pagestyle{empty}\fi
\begin{document}

%%%%%%%%% TITLE
\title{SoftTriple Loss: Deep Metric Learning Without Triplet Sampling}

\author{Qi Qian$^{1}$ \quad Lei Shang$^{2}$ \quad Baigui Sun$^{2}$\quad Juhua Hu$^{3}$\quad Hao Li$^{2}$\quad Rong Jin$^{1}$\\
$^{1}$ Alibaba Group, Bellevue, WA, 98004, USA\\
$^{2}$ Alibaba Group, Hangzhou, China\\
$^{3}$ School of Engineering and Technology\\
University of Washington, Tacoma, WA, 98402, USA\\
{\tt\small \{qi.qian, sl172005, baigui.sbg, lihao.lh, jinrong.jr\}@alibaba-inc.com, juhuah@uw.edu}
% For a paper whose authors are all at the same institution,
% omit the following lines up until the closing ``}''.
% Additional authors and addresses can be added with ``\and'',
% just like the second author.
% To save space, use either the email address or home page, not both
}

\maketitle
%\thispagestyle{empty}

%%%%%%%%% ABSTRACT
\begin{abstract}
Distance metric learning (DML) is to learn the embeddings where examples from the same class are closer than examples from different classes. It can be cast as an optimization problem with triplet constraints. Due to the vast number of triplet constraints, a sampling strategy is essential for DML. With the tremendous success of deep learning in classifications, it has been applied for DML. When learning embeddings with deep neural networks (DNNs), only a mini-batch of data is available at each iteration. The set of triplet constraints has to be sampled within the mini-batch. Since a mini-batch cannot capture the neighbors in the original set well, it makes the learned embeddings sub-optimal. On the contrary, optimizing SoftMax loss, which is a classification loss, with DNN shows a superior performance in certain DML tasks. It inspires us to investigate the formulation of SoftMax. Our analysis shows that SoftMax loss is equivalent to a smoothed triplet loss where each class has a single center. In real-world data, one class can contain several local clusters rather than a single one, e.g., birds of different poses. Therefore, we propose the SoftTriple loss to extend the SoftMax loss with multiple centers for each class. Compared with conventional deep metric learning algorithms, optimizing SoftTriple loss can learn the embeddings without the sampling phase by mildly increasing the size of the last fully connected layer. Experiments on the benchmark fine-grained data sets demonstrate the effectiveness of the proposed loss function. Code is available at \url{https://github.com/idstcv/SoftTriple}.
\end{abstract}

%%%%%%%%% BODY TEXT
\section{Introduction}
%introduce DML
Distance metric learning (DML) has been extensively studied in the past decades due to its broad range of applications, e.g., $k$-nearest neighbor classification~\cite{WeinbergerS09}, image retrieval~\cite{SongXJS16} and clustering~\cite{XingNJR02}. 
\begin{figure}[!ht]
\centering
\includegraphics[width=3.2in]{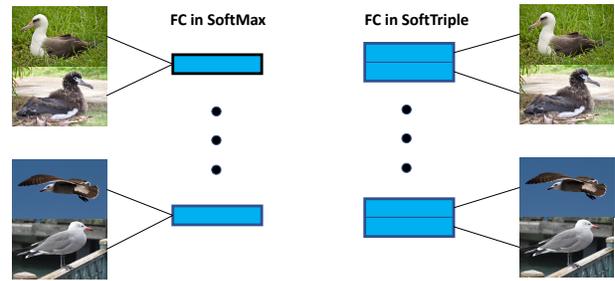}
\caption{Illustration of the proposed SoftTriple loss. In conventional SoftMax loss, each class has a representative center in the last fully connected layer. Examples in the same class will be collapsed to the same center. It may be inappropriate for the real-world data as illustrated. In contrast, SoftTriple loss keeps multiple centers (e.g., 2 centers per class in this example) in the fully connected layer and each image will be assigned to one of them. It is more flexible for modeling intra-class variance in real-world data sets.\label{fig:illu}}
\end{figure}
With an appropriate distance metric, examples from the same class should be closer than examples from different classes. Many algorithms have been proposed to learn a good distance metric~\cite{ParkhiVZ15,QianJY0Z15,SchroffKP15,WeinbergerS09}.

%introduce conventional metric learning
In most of conventional DML methods, examples are represented by hand-crafted features, and DML is to learn a feature mapping to project examples from the original feature space to a new space. The distance can be computed as the Mahalanobis distance~\cite{mahalanobis1936generalized}
\[\mathrm{dist}_M(\x_i,\x_j) = (\x_i-\x_j)^\top M(\x_i-\x_j)\]
where $M$ is the learned distance metric. With this formulation, the main challenge of DML is from the dimensionality of input space. As a metric, the learned matrix $M$ has to be positive semi-definite (PSD) while the cost of keeping the matrix PSD can be up to $\OO(d^3)$, where $d$ is the dimensionality of original features. The early work directly applies PCA to shrink the original space~\cite{WeinbergerS09}. Later, various strategies are developed to reduce the computational cost~\cite{QianJY0Z15,QianJZL15}.

%introduce deep metric learning
Those approaches can obtain the good metric from the input features, but the hand-crafted features are task independent and may cause the loss of information, which limits the performance of DML. With the success of deep neural networks in classification~\cite{KrizhevskySH12}, researchers consider to learn the embeddings directly from deep neural networks~\cite{ParkhiVZ15,SchroffKP15}. Without the explicit feature extraction, deep metric learning boosts the performance by a large margin~\cite{SchroffKP15}. In deep metric learning, the dimensionality of input features is no longer a challenge since neural networks can learn low-dimensional features directly from raw materials, e.g., images, documents, etc. In contrast, generating appropriate constraints for optimization becomes challenging for deep metric learning.

%introduce the challenge in deep DML
It is because most of deep neural networks are trained with the stochastic gradient descent (SGD) algorithm and only a mini-batch of examples are available at each iteration. Since embeddings are optimized with the loss defined on an anchor example and its neighbors (e.g., the active set of pairwise~\cite{XingNJR02} or triplet~\cite{WeinbergerS09} constraints), the examples in a mini-batch may not be able to capture the overall neighborhood well, especially for relatively large data sets. Moreover, a mini-batch contains $\OO(m^2)$ pairs and $\OO(m^3)$ triplets, where $m$ is the size of the mini-batch. An effective sampling strategy over the mini-batch is essential even for a small batch (e.g., $32$) to learn the embeddings efficiently. Many efforts have been devoted to studying sampling an informative mini-batch~\cite{RippelPDB15,SchroffKP15} and sampling triplets within a mini-batch~\cite{ManmathaWSK17,SongXJS16}. Some work also tried to reduce the total number of triplets with proxies~\cite{Attias17,QianTLZJ18}. The sampling phase for the mini-batch and constraints not only loses the information but also makes the optimization complicated. In this work, we consider to learn embeddings without constraints sampling.

%introduce our method
Recently, researches have shown that embeddings obtained directly from optimizing SoftMax loss, which is proposed for classification, perform well on the simple distance based tasks~\cite{Sohn16,WenZL016} and face recognition~\cite{07698, LiuWYLRS17,LiuWYY16,WangCLL18,WangWZJGZL018}. It inspires us to investigate the formulation of SoftMax loss. Our Analysis demonstrates that SoftMax loss is equivalent to a smoothed triplet loss. By providing a single center for each class in the last fully connected layer, the triplet constraint derived by SoftMax loss can be defined on an original example, its corresponding center and a center from a different class. Therefore, embeddings obtained by optimizing SoftMax loss can work well as a distance metric. However, a class in real-world data can consist of multiple local clusters as illustrated in Fig.~\ref{fig:illu} and a single center is insufficient to capture the inherent structure of the data. Consequently, embeddings learned from SoftMax loss can fail in the complex scenario~\cite{Sohn16}.

In this work, we propose to improve SoftMax loss by introducing multiple centers for each class and the novel loss is denoted as SoftTriple loss. Compared with a single center, multiple ones can capture the hidden distribution of the data better due to the fact that they help to reduce the intra-class variance. This property is also crucial to reserve the triplet constraints over original examples while training with multiple centers. Compared with existing deep DML methods, the number of triplets in SoftTriple is linear in the number of original examples. Since the centers are encoded in the last fully connected layer, SoftTriple loss can be optimized without sampling triplets. Fig.~\ref{fig:illu} illustrates the proposed SoftTriple loss. Apparently, SoftTriple loss has to determine the number of centers for each class. To alleviate this issue, we develop a strategy that sets a sufficiently large number of centers for each class at the beginning and then applies $L_{2,1}$ norm to obtain a compact set of centers. We demonstrate the proposed loss on the fine-grained visual categorization tasks, where capturing local clusters is essential for good performance~\cite{QianJZL15}.

The rest of this paper is organized as follows. Section \ref{sec:related} reviews the related work of conventional distance metric learning and deep metric learning. Section \ref{sec:method} analyzes the SoftMax loss and proposes the SoftTriple loss accordingly. Section \ref{sec:exp} conducts comparisons on benchmark data sets. Finally, Section \ref{sec:conclud} concludes this work and discusses future directions.

\section{Related Work}
\label{sec:related}
\paragraph{Distance metric learning}
Many DML methods have been developed when input features are provided~\cite{WeinbergerS09,XingNJR02}. The dimensionality of input features is a critical challenge for those methods due to the PSD projection, and many strategies have been proposed to alleviate it. The most straightforward way is to reduce the dimension of input space by PCA~\cite{WeinbergerS09}. However, PCA is task independent and may hurt the performance of learned embeddings. Some works try to reduce the number of valid parameters with the low-rank assumption~\cite{LimLM13}. \cite{QianJY0Z15} decreases the computational cost by reducing the number of PSD projections. \cite{QianJZL15} proposes to learn the dual variables in the low-dimensional space introduced by random projections and then recover the metric in the original space. After addressing the challenge from the dimensionality, the hand-crafted features become the bottleneck of performance improvement. 

The forms of constraints for metric learning are also developed in these methods. Early work focuses on optimizing pairwise constraints, which require the distances between examples from the same class small while those from different classes large~\cite{XingNJR02}. Later, \cite{WeinbergerS09} develops the triplet constraints, where given an anchor example, the distance between the anchor point and a similar example should be smaller than that between the anchor point and a dissimilar example by a large margin. It is obvious that the number of pairwise constraints is $\OO(n^2)$ while that of triplet constraints can be up to $\OO(n^3)$, where $n$ is the number of original examples. Compared with the pairwise constraints, triplet constraints optimize the geometry of local cluster and are more applicable for modeling intra-class variance. In this work, we will focus on the triplet constraints.

\paragraph{Deep metric learning}
Deep metric learning aims to learn the embeddings directly from the raw materials (e.g., images) by deep neural networks~\cite{ParkhiVZ15,SchroffKP15}. With the task dependent embeddings, the performance of metric learning has a dramatical improvement. However, most of deep models are trained with SGD that allows only a mini-batch of data at each iteration. Since the size of mini-batch is small, the information in it is limited compared to the original data. To alleviate this problem, algorithms have to develop an effective sampling strategy to generate the mini-batch and then sample triplet constraints from it. A straightforward way is increasing the size of mini-batch~\cite{SchroffKP15}. However, the large mini-batch will suffer from the GPU memory limitation and can also increase the challenge of sampling triplets. Later, \cite{RippelPDB15} proposes to generate the mini-batch from neighbor classes. Besides, there are various sampling strategies for obtaining constraints~\cite{GeHDS18,ManmathaWSK17,SchroffKP15,SongXJS16}. \cite{SchroffKP15} proposes to sample the semi-hard negative examples. \cite{SongXJS16} adopts all negative examples within the margin for each positive pair. \cite{ManmathaWSK17} develops distance weighted sampling that samples examples according to the distance from the anchor example. \cite{GeHDS18} selects hard triplets with a dynamic violate margin from a hierarchical class-level tree. However, all of these strategies may fail to capture the distribution of the whole data set. Moreover, they make the optimization in deep DML complicated. 

\paragraph{Learning with proxies}
Recently, some researchers consider to reduce the total number of triplets to alleviate the challenge from the large number of triplets. \cite{Attias17} constructs the triplet loss with one original example and two proxies. Since the number of proxies is significantly less than the number of original examples, proxies can be kept in the memory that help to avoid the sampling over different batches. However, it only provides a single proxy for each class when label information is available, which is similar to SoftMax. \cite{QianTLZJ18} proposes a conventional DML algorithm to construct the triplet loss only with latent examples, which assigns multiple centers for each class and further reduces the number of triplets. In this work, we propose to learn the embeddings by optimizing the proposed SoftTriple loss to eliminate the sampling phase and capture the local geometry of each class simultaneously.

\section{SoftTriple Loss}
\label{sec:method}
In this section, we first introduce the SoftMax loss and the triplet loss and then study the relationship between them to derive the SoftTriple loss. 

Denote the embedding of the $i$-th example as $\x_i$ and the corresponding label as $y_i$, then the conditional probability output by a deep neural network can be estimated via the SoftMax operator
\[\Pr(Y=y_i|\x_i) = \frac{\exp(\w_{y_i}^\top \x_i)}{\sum_j^C \exp(\w_j^\top \x_i)}\]
where $[\w_1,\cdots,\w_C]\in\R^{d\times C}$ is the last fully connected layer. $C$ denotes the number of classes and $d$ is the dimension of embeddings.
The corresponding SoftMax loss is
\[\ell_{\mathrm{SoftMax}}(\x_i) = -\log\frac{\exp(\w_{y_i}^\top \x_i)}{\sum_j \exp(\w_j^\top \x_i)}\]
A deep model can be learned by minimizing losses over examples. This loss has been prevalently applied for classification task~\cite{KrizhevskySH12}.

Given a triplet $(\x_i, \x_j, \x_k)$, DML aims to learn good embeddings such that examples from the same class are closer than examples from different classes, i.e.,
\[\forall i,j,k,\quad \|\x_i- \x_k\|_2^2 - \|\x_i- \x_j\|_2^2 \geq \delta\]
where $\x_i$ and $\x_j$ are from the same class and $\x_k$ is from a different class. $\delta$ is a predefined margin. When each example has the unit length (i.e., $\|\x\|_2=1$), the triplet constraint can be simplified as
\begin{eqnarray}\label{eq:const}
\forall i,j,k, \quad \x_i^\top \x_j - \x_i^\top \x_k \geq \delta
\end{eqnarray}
where we ignore the rescaling of $\delta$.
The corresponding triplet loss can be written as
\begin{eqnarray}\label{eq:trih}
\ell_{\mathrm{triplet}}(\x_i,\x_j,\x_k) = [\delta+ \x_i^{\top} \x_k - \x_i^{\top} \x_j ]_+
\end{eqnarray}

It is obvious from Eqn.~\ref{eq:const} that the number of total triplets can be cubic in the number of examples, which makes sampling inevitable for most of triplet based DML algorithms. 

With the unit length for both $\w$ and $\x$, the normalized SoftMax loss can be written as 
\begin{eqnarray}\label{eq:SoftMax}
\ell_{\mathrm{SoftMax_{norm}}}(\x_i) = -\log\frac{\exp(\lambda \w_{y_i}^\top \x_i)}{\sum_j \exp(\lambda \w_j^\top \x_i)}
\end{eqnarray}
where $\lambda$ is a scaling factor.

Surprisingly, we find that minimizing the normalized SoftMax loss with the smooth term $\lambda$ is equivalent to optimizing a smoothed triplet loss.
\begin{prop}\label{prop:1}
\begin{align}\label{eq:1}
&\ell_{\mathrm{SoftMax_{norm}}}(\x_i) = \max_{\p\in\Delta} \lambda\sum_j \p_j \x_i^\top(\w_j - \w_{y_i})+ H(\p)
\end{align}
where $\p\in\R^C$ is a distribution over classes and $\Delta$ is the simplex as $\Delta=\{\p|\sum_j \p_j=1,\forall j,\p_j\geq 0\}$. $H(\p)$ denotes the entropy of the distribution $\p$.
\end{prop}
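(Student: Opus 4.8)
The plan is to recognize the right-hand side as the Gibbs (Fenchel) variational characterization of the log-partition function and reduce the claimed identity to that single fact. First I would split the logarithm of the quotient in the definition of the loss, obtaining
\[
\ell_{\mathrm{SoftMax_{norm}}}(\x_i) = -\lambda \w_{y_i}^\top \x_i + \log\sum_j \exp(\lambda \w_j^\top \x_i).
\]
The first term does not depend on the maximizing variable $\p$, so the entire argument hinges on rewriting the second term, the log-sum-exp.

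The key step is the variational formula: for any vector $\a=(a_1,\dots,a_C)$,
\[
\log\sum_j \exp(a_j) = \max_{\p\in\Delta}\Big(\sum_j \p_j a_j + H(\p)\Big).
\]
I would prove this by forming the Lagrangian of the concave objective $\sum_j \p_j a_j + H(\p)$ subject to $\sum_j \p_j = 1$; the nonnegativity constraints are inactive at the optimum because the entropy term drives each coordinate strictly positive. Differentiating yields the stationarity condition $a_j - \log\p_j - 1 = \mu$, whose solution is the SoftMax distribution $\p_j = \exp(a_j)/\sum_k \exp(a_k)$. Substituting this maximizer back into the objective collapses it exactly to $\log\sum_j \exp(a_j)$, establishing the formula. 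I then apply it with $a_j = \lambda \w_j^\top \x_i$.

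Finally I would absorb the constant term into the maximum. Since $\sum_j \p_j = 1$ for every $\p\in\Delta$, we have $-\lambda \w_{y_i}^\top \x_i = -\lambda\sum_j \p_j \w_{y_i}^\top \x_i$, so this quantity can be moved inside the sum without affecting the maximizer. Merging the two sums produces $\lambda\sum_j \p_j \x_i^\top(\w_j - \w_{y_i})$ while leaving $H(\p)$ untouched, which is precisely the claimed right-hand side.

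I expect the only genuinely nontrivial ingredient to be the concavity-plus-Lagrangian verification of the variational formula; once it is in hand, the remaining manipulations are bookkeeping that rely solely on the simplex constraint $\sum_j \p_j = 1$.
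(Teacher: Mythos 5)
Your proposal is correct and follows essentially the same route as the paper: both hinge on solving the entropy-regularized linear maximization over the simplex via Lagrangian/K.K.T.\ conditions, identifying the maximizer as the SoftMax distribution, and substituting it back to recover the log-sum-exp. The only difference is bookkeeping—the paper applies this directly to the shifted scores $\lambda\x_i^\top(\w_j-\w_{y_i})$, whereas you apply it to $\lambda\w_j^\top\x_i$ and then absorb the constant $-\lambda\w_{y_i}^\top\x_i$ using $\sum_j\p_j=1$; the two are equivalent since the maximizing distribution is invariant under a uniform shift of the scores.
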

\begin{proof}

According to the K.K.T. condition~\cite{boyd2004convex}, the distribution $\p$ in Eqn.~\ref{eq:1} has the closed-form solution
\[\p_j = \frac{\exp(\lambda \x_i^\top(\w_j - \w_{y_i}))}{\sum_j \exp(\lambda \x_i^\top(\w_j - \w_{y_i}))}\]
Therefore, we have
\small{
\begin{align*}
&\ell_{\mathrm{SoftMax_{norm}}}(\x_i) =\lambda\sum_j \p_j \x_i^\top(\w_j - \w_{y_i})+ H(\p)\\
&=\log(\sum_j \exp(\lambda \x_i^\top(\w_j - \w_{y_i})))= -\log\frac{\exp(\lambda \w_{y_i}^\top \x_i)}{\sum_j \exp(\lambda \w_j^\top \x_i)}
\end{align*}}
\end{proof}
\paragraph{Remark 1} Proposition~\ref{prop:1} indicates that the SoftMax loss optimizes the triplet constraints consisting of an original example and two centers, i.e., $(\x_i,\w_{y_i},\w_j)$. Compared with triplet constraints in Eqn.~\ref{eq:const}, the target of SoftMax loss is
\[\forall i, j, \quad \x_i^\top \w_{y_i} - \x_i^\top \w_{j} \geq 0\]
Consequently, the embeddings learned by minimizing SoftMax loss can be applicable for the distance-based tasks while it is designed for the classification task.
\paragraph{Remark 2} Without the entropy regularizer, the loss becomes
\[\max_{\p\in\Delta} \lambda\sum_j \p_j \x_i^\top\w_j -\lambda \x_i^\top\w_{y_i}\]
which is equivalent to
\[\max_j \{\x_i^\top\w_j\} - \x_i\w_{y_i}\]
Explicitly, it punishes the triplet with the most violation and becomes zero when the nearest neighbor of $\x_i$ is the corresponding center $\w_{y_i}$. The entropy regularizer reduces the influence from outliers and makes the loss more robust. $\lambda$ trades between the hardness of triplets and the regularizer. Moreover, minimizing the maximal entropy can make the distribution concentrated and further push the example away from irrelevant centers, which implies a large margin property.

\paragraph{Remark 3} Applying the similar analysis to the ProxyNCA loss~\cite{Attias17}: $\ell_{\mathrm{ProxyNCA}}(\x_i) = -\log\frac{\exp(\w_{y_i}^\top \x_i)}{\sum_{j\not =y_i} \exp(\w_j^\top \x_i)}$, we have
\begin{align*}
&\ell_{\mathrm{ProxyNCA}}(\x_i)  = \max_{\p\in\Delta} \lambda \sum_{j\not= y_i} \p_j \x_i^\top (\w_j  - \w_{y_i})+ H(\p)
\end{align*}
where $\p\in\R^{C-1}$. Compared with the SoftMax loss, it eliminates the benchmark triplet containing only the corresponding class center, which makes the loss unbounded. Our analysis suggests that the loss can be bounded as in Eqn.~\ref{eq:trih}: 
$\ell_{\mathrm{ProxyNCA}}^{\mathrm{hinge}}(\x_i) = [-\log\frac{\exp(\w_{y_i}^\top \x_i)}{\sum_{j\not =y_i} \exp(\w_j^\top \x_i)}]_+$. Validating the bounded loss is out of the scope of this work.

Despite optimizing SoftMax loss can learn the meaningful feature embeddings, the drawback is straightforward. It assumes that there is only a single center for each class while a real-world class can contain multiple local clusters due to the large intra-class variance as in Fig.~\ref{fig:illu}. The triplet constraints generated by conventional SoftMax loss is too brief to capture the complex geometry of the original data. Therefore, we introduce multiple centers for each class.

\subsection{Multiple Centers}
Now, we assume that each class has $K$ centers. Then, the similarity between the example $\x_i$ and the class $c$ can be defined as
\begin{eqnarray}\label{eq:simh}
\mathcal{S}_{i,c} = \max_k \x_i^\top \w_c^k
\end{eqnarray}
Note that other definitions of similarity can be applicable for this scenario (e.g., $\min_{\z\in \R^K} \|[\w_c^1,\cdots,\w_c^K] \z - \x_i\|_2$). We adopt a simple form to illustrate the influence of multiple centers.

With the definition of the similarity, the triplet constraint requires an example to be closer to its corresponding class than other classes
\[\forall j,\quad \mathcal{S}_{i,y_i} -\mathcal{S}_{i,j}\geq 0\]
As we mentioned above, minimizing the entropy term $H(\p)$ can help to pull the example to the corresponding center. To break the tie explicitly, we consider to introduce a small margin as in the conventional triplet loss in Eqn.~\ref{eq:const} and define the constraints as
\[\forall j_{j\not = y_i}, \quad \mathcal{S}_{i,y_{i}} - \mathcal{S}_{i,j}\geq \delta\]

By replacing the similarity in Eqn.~\ref{eq:1}, we can obtain the HardTriple loss as
\begin{align}\label{eq:hardtrip}
&\ell_{\mathrm{HardTriple}}(\x_i) = \max_{\p\in\Delta} \lambda\bigg(\sum_{j\not=y_i} \p_j (\mathcal{S}_{i,j} - (\mathcal{S}_{i,y_i}-\delta))\nonumber\\
&+\p_{y_i}(\mathcal{S}_{i,y_i} - \delta - (\mathcal{S}_{i,y_i}-\delta))\bigg)+ H(\p)\nonumber\\
&=-\log\frac{\exp(\lambda(\mathcal{S}_{i,y_i}-\delta))}{\exp(\lambda(\mathcal{S}_{i,y_i}-\delta))+\sum_{j\not=y_i}\exp(\lambda\mathcal{S}_{i,j})  }
\end{align}

HardTriple loss improves the SoftMax loss by providing multiple centers for each class. However, it requires the max operator to obtain the nearest center in each class while this operator is not smooth and the assignment can be sensitive between multiple centers. Inspired by the SoftMax loss, we can improve the robustness by smoothing the max operator.

Consider the problem
\[\max_k\x_i^\top\w_c^k\]
which is equivalent to
\begin{eqnarray}\label{eq:soft}
\max_{\q\in\Delta}\sum_k \q_k \x_i^\top \w_c^k
\end{eqnarray}
we add the entropy regularizer to the distribution $\q$ as
\[\max_{\q\in\Delta}\sum_k \q_k \x_i^\top \w_c^k+\gamma H(\q)\]
With a similar analysis as in Proposition~\ref{prop:1}, $\q$ has the closed-form solution as
\[\q_k = \frac{\exp(\frac{1}{\gamma}\x_i^\top \w_c^k)}{\sum_k \exp(\frac{1}{\gamma}\x_i^\top \w_c^k)}\]
Taking it back to the Eqn.~\ref{eq:soft}, we define the relaxed similarity between the example $\x_i$ and the class $c$ as
\[\mathcal{S}'_{i,c} = \sum_k  \frac{\exp(\frac{1}{\gamma}\x_i^\top \w_c^k)}{\sum_k \exp(\frac{1}{\gamma}\x_i^\top \w_c^k)}\x_i^\top \w_c^k\]

By applying the smoothed similarity, we define the SoftTriple loss as
\begin{align}\label{eq:softtrip}
&\ell_{\mathrm{SoftTriple}}(\x_i) \nonumber\\
&= -\log\frac{\exp(\lambda(\mathcal{S}'_{i,y_i}-\delta))}{\exp(\lambda(\mathcal{S}'_{i,y_i}-\delta))+\sum_{j\not=y_i}\exp(\lambda\mathcal{S}'_{i,j})  }
\end{align}

Fig.~\ref{fig:illu2} illustrates the differences between the SoftMax loss and the proposed losses. 
\begin{figure}[!ht]
\centering
\includegraphics[width=3.2in]{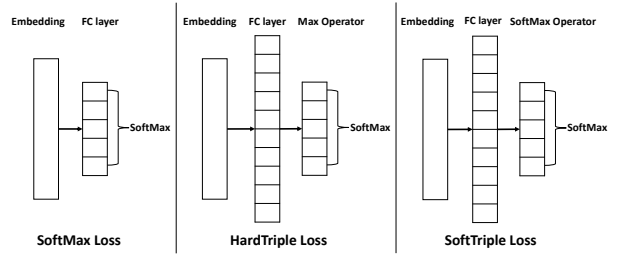}
\caption{Illustration of differences between SoftMax loss and proposed losses. Compared with the SoftMax loss, we first increase the dimension of the FC layer to include multiple centers for each class (e.g., 2 centers per class in this example). Then, we obtain the similarity for each class by different operators. Finally, the distribution over different classes is computed with the similarity obtained from each class.\label{fig:illu2}}
\end{figure}

Finally, we will show that the strategy of applying centers to construct triplet constraints can recover the constraints on original triplets. 
\begin{thm}\label{thm:1}
Given two examples $\x_i$ and $\x_j$ that are from the same class and have the same nearest center and $\x_k$ is from a different class, if the triple constant containing centers is satisfied
\[\x_i^\top\w_{y_i} - \x_i^\top\w_{y_k}\geq \delta\]
and we assume $\forall i, \|\x_i-\w_{y_i}\|_2\leq \epsilon$, then we have
\[\x_i^\top \x_j - \x_i^\top\x_k\geq \delta-2\epsilon \]
\end{thm}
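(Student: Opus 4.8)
The plan is to bound the two inner products $\x_i^\top\x_j$ and $\x_i^\top\x_k$ separately and then combine them with the hypothesized center-level triplet constraint. The underlying intuition is that, since every example lies within $\epsilon$ of its nearest center and all vectors have unit norm, replacing an example by its center changes any inner product by at most $\epsilon$; hence the center-level margin $\delta$ degrades to at most $\delta-2\epsilon$ when we translate the statement back to the original examples.

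First I would fix the notation carefully, since this is the only place subtlety arises. Write $\w_{y_i}$ for the common nearest center shared by $\x_i$ and $\x_j$; this shared center is precisely what the same-nearest-center hypothesis provides. Write $\w_{y_k}$ for the nearest center of $\x_k$. The assumption that each example is within $\epsilon$ of its nearest center then supplies the three estimates $\|\x_i-\w_{y_i}\|_2\le\epsilon$, $\|\x_j-\w_{y_i}\|_2\le\epsilon$, and $\|\x_k-\w_{y_k}\|_2\le\epsilon$.

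Next, for the positive pair I would split $\x_i^\top\x_j=\x_i^\top\w_{y_i}+\x_i^\top(\x_j-\w_{y_i})$ and apply Cauchy--Schwarz together with $\|\x_i\|_2=1$ to obtain the lower bound $\x_i^\top\x_j\ge \x_i^\top\w_{y_i}-\epsilon$. Symmetrically, for the negative pair I would split $\x_i^\top\x_k=\x_i^\top\w_{y_k}+\x_i^\top(\x_k-\w_{y_k})$ to get the upper bound $\x_i^\top\x_k\le \x_i^\top\w_{y_k}+\epsilon$. Subtracting the two bounds gives $\x_i^\top\x_j-\x_i^\top\x_k\ge(\x_i^\top\w_{y_i}-\x_i^\top\w_{y_k})-2\epsilon$, and invoking the assumed center constraint $\x_i^\top\w_{y_i}-\x_i^\top\w_{y_k}\ge\delta$ yields the claim.

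There is no serious obstacle here; the argument is a routine perturbation estimate built on Cauchy--Schwarz. The only step that genuinely requires care is the use of the same-nearest-center hypothesis: it is exactly what licenses treating a single center $\w_{y_i}$ as the reference for both $\x_i$ and $\x_j$. Without it, $\x_j$ could be close to a different center of the same class, the bound $\|\x_j-\w_{y_i}\|_2\le\epsilon$ would fail, and the recovery of the original-example margin would break down.
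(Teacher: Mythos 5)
Your proof is correct and is essentially the same argument as the paper's: both replace $\x_j$ and $\x_k$ by their nearest centers $\w_{y_i}$ and $\w_{y_k}$, bound the perturbation terms $\x_i^\top(\x_j-\w_{y_i})$ and $\x_i^\top(\w_{y_k}-\x_k)$ via Cauchy--Schwarz with $\|\x_i\|_2=1$, and invoke the center-level margin constraint. The only difference is presentational (you bound the two inner products separately and subtract, while the paper runs a single chain of inequalities), and your explicit remark on why the same-nearest-center hypothesis is needed matches the paper's implicit use of $\|\x_j-\w_{y_i}\|_2\leq\epsilon$.
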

\begin{proof}
\begin{eqnarray*}
&&\x_i^\top \x_j - \x_i^\top\x_k  = \x_i^\top (\x_j-\w_{y_i})+\x_i^\top\w_{y_i} - \x_i^\top\x_k\\
&&\geq \x_i^\top (\x_j-\w_{y_i})+\x_i^\top(\w_{y_k} - \x_k)+\delta\\
&&\geq \delta - \|\x_i\|_2\|\x_j-\w_{y_i}\|_2-\|\x_i\|_2\|\w_{y_k} - \x_k\|_2\\
&&=\delta - \|\x_j-\w_{y_i}\|_2-\|\w_{y_k} - \x_k\|_2 \geq \delta - 2\epsilon
\end{eqnarray*}
\end{proof}

Theorem~\ref{thm:1} demonstrates that optimizing the triplets consisting of centers with a margin $\delta$ can reserve the large margin property on the original triplet constraints. It also implies that more centers can be helpful to reduce the intra-class variance $\epsilon$. In the extreme case that the number of centers is equal to the number of examples, $\epsilon$ becomes zero. However, adding more centers will increase the size of the last fully connected layer and make the optimization slow and computation expensive. Besides, it may incur the overfitting problem.

Therefore, we have to choose an appropriate number of centers for each class that can have a small approximation error while keeping a compact set of centers. We will demonstrate the strategy in the next subsection.

\subsection{Adaptive Number of Centers}

Finding an appropriate number of centers for data is a challenging problem that also appears in unsupervised learning, e.g.,  clustering. The number of centers $K$ trades between the efficiency and effectiveness. In conventional DML algorithms, $K$ equals to the number of original examples. It makes the number of total triplet constraints up to cubic of the number of original examples. In SoftMax loss, $K=1$ reduces the number of constraints to be linear in the number of original examples, which is efficient but can be ineffective. Without the prior knowledge about the distribution of each class, it is hard to set $K$ precisely. 

Different from the strategy of setting the appropriate $K$ for each class, we propose to set a sufficiently large $K$ and then encourage similar centers to merge with each other. It can keep the diversity in the generated centers while shrinking the number of unique centers.

For each center $\w_j^t$, we can generate a matrix as
\[M_j^t = [\w_j^1-\w_j^t,\cdots,\w_j^K-\w_j^t]^\top\]
If $\w_j^s$ and $\w_j^t$ are similar, they can be collapsed to be the same one such that $\|\w_j^s-\w_j^t\|_2=0$, which is the $L_2$ norm of the $s$-th row in the matrix $M_j^t$. Therefore, we regularize the $L_2$ norm of rows in $M_j^t$ to obtain a sparse set of centers, which can be written as the $L_{2,1}$ norm
\[\|M_j^t\|_{2,1} = \sum_s^K \|\w_j^s - \w_j^t\|_2\]

By accumulating $L_{2,1}$ norm over multiple centers, we can have the regularizer for the $j$-th class as 
\[R(\w_j^1,\cdots,\w_j^K) = \sum_t^K\|M_j^t\|_{2,1}\]
Since $\w$ has the unit length, the regularizer is simplified as
\begin{eqnarray}\label{eq:reg}
&&R(\w_j^1,\cdots,\w_j^K) = \sum_{t=1}^K \sum_{s=t+1}^K \sqrt{2-2\w_j^{s\top}\w_j^t}
\end{eqnarray}

With the regularizer, our final objective becomes
\begin{eqnarray}\label{eq:object}
\min \frac{1}{N}\sum_i\ell_{\mathrm{SoftTriple}}(\x_i) +\frac{\tau\sum_j^CR(\w_j^1,\cdots,\w_j^K)}{CK(K-1)}
\end{eqnarray}
where $N$ is the number of total examples.

\section{Experiments}
\label{sec:exp}
We conduct experiments on three benchmark fine-grained visual categorization data sets: \textit{CUB-2011}, \textit{Cars196} and \textit{SOP}. We follow the settings in other works~\cite{GeHDS18,Attias17} for the fair comparison. Specifically, we adopt the Inception~\cite{SzegedyLJSRAEVR15} with the batch normalization~\cite{IoffeS15} as the backbone architecture. The parameters of the backbone are initialized with the model trained on the ImageNet ILSVRC 2012 data set~\cite{ILSVRC15} and then fine-tuned on the target data sets. The images are cropped to $224\times 224$ as the input of the network. During training, only random horizontal mirroring and random crop are used as the data augmentation. A single center crop is taken for test. The model is optimized by Adam with the batch size as 32 and the number of epochs as $50$. The initial learning rates for the backbone and centers are set to be $1e\mbox{-}4$ and $1e\mbox{-}2$, respectively. Then, they are divided by $10$ at $\{20, 40\}$ epochs. Considering that images in \textit{CUB-2011} and \textit{Cars196} are similar to those in ImageNet, we freeze BN on these two data sets and keep BN training on the rest one. Embeddings of examples and centers have the unit length in the experiments.

We compare the proposed triplet loss to the normalized SoftMax loss. The SoftMax loss in Eqn.~\ref{eq:SoftMax} is denoted as \textbf{SoftMax$_\mathrm{norm}$}. We refer the objective in Eqn.~\ref{eq:object} as \textbf{SoftTriple}. We set $\tau = 0.2$ and $\gamma=0.1$ for SoftTriple. Besides, we set a small margin as $\delta= 0.01$ to break the tie explicitly. The number of centers is set to $K=10$.

We evaluate the performance of the learned embeddings from different methods on the tasks of retrieval and clustering. For retrieval task, we use the Recall@$k$ metric as in \cite{SongXJS16}. The quality of clustering is measured by the Normalized Mutual Information ($\mathrm{NMI}$)~\cite{manning2010}. Given the clustering assignment $\mathbb{C} = \{c_1,\cdots,c_n\}$ and the ground-truth label $\Omega=\{y_1,\cdots,y_n\}$, NMI is computed as
$\mathrm{NMI} = \frac{2 I(\Omega;\mathcal{\mathbb{C}})}{H(\Omega)+H(\mathcal{\mathbb{C})}}$, where $I(\cdot,\cdot)$ measures the mutual information and $H(\cdot)$ denotes the entropy.

\subsection{CUB-2011}
First, we compare the methods on a fine-grained birds data set \textit{CUB-2011}~\cite{WahCUB_200_2011}. It consists of $200$ species of birds and $11,788$ images. Following the common practice, we split the data set as that the first $100$ classes are used for training and the rest are used for test. We note that different works report the results with different dimension of embeddings while the size of embeddings has a significant impact on the performance. For fair comparison, we report the results for the dimension of $64$, which is adopted by many existing methods and the results with $512$ feature embeddings, which reports the state-of-the-art results on most of data sets. 

\begin{figure*}[!ht]
\centering
\includegraphics[width=4.5in]{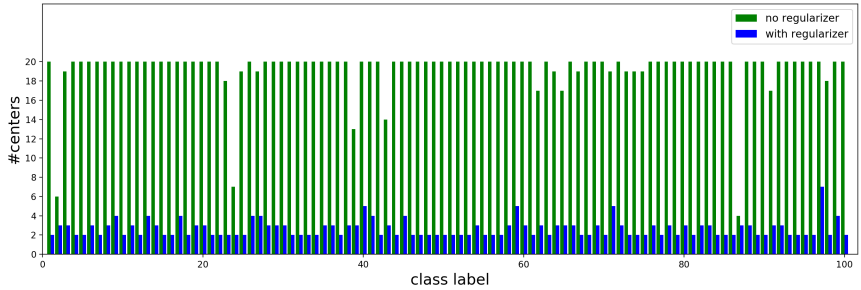}
\caption{Comparison of the number of unique centers in each class on \textit{CUB-2011}. The initial number of centers is set to $20$.\label{fig:classn}}
\end{figure*}

Table~\ref{ta:birds64} summarizes the results with $64$ embeddings. Note that Npairs$^*$ applies the multi-scale test while all other methods take a single crop test. For SemiHard~\cite{SchroffKP15}, we report the result recorded in \cite{SongJR017}. First, it is surprising to observe that the performance of SoftMax$_\mathrm{norm}$ surpasses that of the existing metric learning methods. It is potentially due to the fact that SoftMax loss optimizes the relations of examples as a smoothed triplet loss, which is analyzed in Proposition~\ref{prop:1}. Second, SoftTriple demonstrates the best performance among all benchmark methods. Compared to ProxyNCA, SoftTriple improves the state-of-the-art performance by $10\%$ on R@1. Besides, it is $2\%$ better than SoftMax$_{\mathrm{norm}}$. It verifies that SoftMax loss cannot capture the complex geometry of real-world data set with a single center for each class. When increasing the number of centers, SoftTriple can depict the inherent structure of data better. Finally, both of SoftMax and SoftTriple show the superior performance compared to existing methods. It demonstrates that meaningful embeddings can be learned without a sampling phase.

\begin{table}[!ht]
\centering
\small
\caption{Comparison on \textit{CUB-2011}. The dimension of the embeddings for all methods is $64$.}\label{ta:birds64}
\begin{tabular}{c|ccccc}
Methods&R@1&R@2&R@4&R@8&NMI\\\hline
SemiHard~\cite{SchroffKP15}&42.6&55.0&66.4&77.2&55.4\\
LiftedStruct~\cite{SongXJS16}&43.6&56.6&68.6&79.6&56.5\\
Clustering~\cite{SongJR017}&48.2&61.4&71.8&81.9&59.2\\
Npairs$^*$~\cite{Sohn16}&51.0&63.3&74.3&83.2&60.4\\
ProxyNCA~\cite{Attias17}&49.2&61.9&67.9&72.4&59.5\\\hline
SoftMax$_{\mathrm{norm}}$&57.8&70.0&80.1&87.9&65.3\\
SoftTriple&\textbf{60.1}&\textbf{71.9}&\textbf{81.2}&\textbf{88.5}&\textbf{66.2}\\
\end{tabular}
\end{table}

Table~\ref{ta:birds512} compares SoftTriple with $512$ embeddings to the methods with large embeddings. HDC~\cite{YuanYZ17} applies the dimension as $384$. Margin~\cite{ManmathaWSK17} takes $128$ dimension of embeddings and uses ResNet50~\cite{HeZRS16} as the backbone. HTL~\cite{GeHDS18} sets the dimension of embeddings to $512$ and reports the state-of-the-art result on the backbone of Inception. With the large number of embeddings, it is obvious that all methods outperform existing DML methods with $64$ embeddings in Table~\ref{ta:birds64}. It is as expected since the high dimensional space can separate examples better, which is consistent with the observation in other work~\cite{SongXJS16}. Compared with other methods, the R@1 of SoftTriple improves more than $8\%$ over HTL that has the same backbone as SoftTriple. It also increases R@1 by about $2\%$ over Margin, which applies a stronger backbone than Inception. It shows that SoftTriple loss is applicable with large embeddings.

\begin{table}[!ht]
\centering
\small
\caption{Comparison on \textit{CUB-2011} with large embeddings. ``-'' means the result is not available.}\label{ta:birds512}
\begin{tabular}{c|ccccc}
Methods&R@1&R@2&R@4&R@8&NMI\\\hline
HDC~\cite{YuanYZ17}&53.6&65.7&77.0&85.6&-\\
Margin~\cite{ManmathaWSK17}&63.6&74.4&83.1&90.0&69.0\\
HTL~\cite{GeHDS18}&57.1&68.8&78.7&86.5&-\\\hline
SoftMax$_{\mathrm{norm}}$&64.2&75.6&84.3&90.2&68.3\\
SoftTriple&\textbf{65.4}&\textbf{76.4}&\textbf{84.5}&\textbf{90.4}&\textbf{69.3}\\
\end{tabular}
\end{table}

To validate the effect of the proposed regularizer, we compare the number of unique centers for each class in Fig.~\ref{fig:classn}. We set a larger number of centers as $K=20$ to make the results explicit and then run SoftTriple with and without the regularizer in Eqn.~\ref{eq:reg}. Fig.~\ref{fig:classn} illustrates that the one without regularizer will hold a set of similar centers. In contrast, SoftTriple with the regularizer can shrink the size of centers significantly and make the optimization effective. 

Besides, we demonstrate the R@1 of SoftTriple with varying the number of centers in Fig.~\ref{fig:recall}. Red line denotes SoftTriple loss equipped with the regularizer while blue dashed line has no regularizer. We find that when increasing the number of centers from $1$ to $10$, the performance of SoftTriple is improved significantly, which confirms that with leveraging multiple centers, the learned embeddings can capture the data distribution better. If adding more centers, the performance of SoftTriple almost remains the same and it shows that the regularizer can help to learn the compact set of centers and will not be influenced by the initial number of centers. On the contrary, without the regularizer, the blue dashed line illustrates that the performance will degrade due to overfitting when the number of centers are over-parameterized.

\begin{figure}[!ht]
\centering
\includegraphics[width=2in]{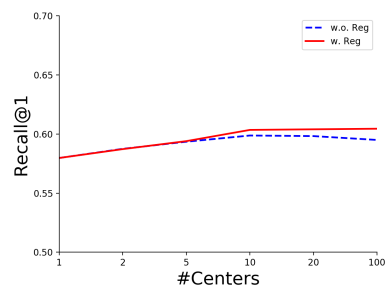}
\caption{Illustration of SoftTriple with different number of centers and the influence of the regularizer. With the proposed regularizer as denoted by the red line, the performance is stable to the initial number of centers $K$ when it is sufficiently large.\label{fig:recall}}
\end{figure}

Finally, we illustrate the examples of retrieved images in Fig.~\ref{fig:exa}. The first column indicates the query image. The columns $2$-$4$ show the most similar images retrieved according to the embeddings learned by SoftMax$_\mathrm{norm}$. The last four columns are the similar images returned by using the embeddings from SoftTriple. Evidently, embeddings from SoftMax$_\mathrm{norm}$ can obtain the meaningful neighbors while the objective is for classification. Besides, SoftTriple improves the performance and can eliminate the images from different classes among the top of retrieved images, which are highlighted with red bounding boxes in SoftMax$_\mathrm{norm}$.

\begin{figure}[!ht]
\centering
\includegraphics[width=3in]{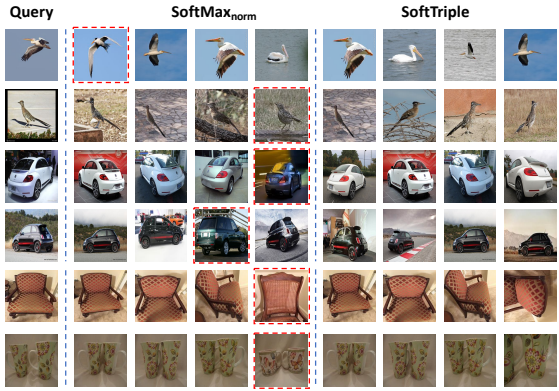}
\caption{Examples of retrieved most similar images with the learned embeddings from SoftMax$_\mathrm{norm}$ and SoftTriple. The images from the classes that are different from the query image are highlighted by red bounding boxes.}\label{fig:exa}
\end{figure}

\subsection{Cars196}

Then, we conduct the experiments on \textit{Cars196} data set~\cite{3DRR2013}, which contains $196$ models of cars and $16,185$ images. We use the first $98$ classes for training and the rest for test. Table~\ref{ta:cars64} summaries the performance with $64$ embeddings. The observation is similar as for \textit{CUB-2011}. SoftMax$_{\mathrm{norm}}$ shows the superior performance and is $3\%$ better than ProxyNCA on R@1. Additionally, SoftTriple can further improve the performance by about $2\%$, which demonstrates the effectiveness of the proposed loss function.

\begin{table}[!ht]
\centering
\small
\caption{Comparison on \textit{Cars196}. The dimension is $64$.}\label{ta:cars64}
\begin{tabular}{c|ccccc}
Methods&R@1&R@2&R@4&R@8&NMI\\\hline
SemiHard~\cite{SchroffKP15}&51.5&63.8&73.5&82.4&53.4\\
LiftedStruct~\cite{SongXJS16}&53.0&65.7&76.0&84.3&56.9\\
Clustering~\cite{SongJR017}&58.1&70.6&80.3&87.8&59.0\\
Npairs$^*$~\cite{Sohn16}&71.1&79.7&86.5&91.6&64.0\\
ProxyNCA~\cite{Attias17}&73.2&82.4&86.4&88.7&64.9\\\hline
SoftMax$_{\mathrm{norm}}$&76.8&85.6&91.3&95.2&66.7\\
SoftTriple&\textbf{78.6}&\textbf{86.6}&\textbf{91.8}&\textbf{95.4}&\textbf{67.0}\\
\end{tabular}
\end{table}

In Table~\ref{ta:cars512}, we present the comparison with large dimension of embeddings. The number of embeddings for all methods in the comparison is the same as described in the experiments on \textit{CUB-2011}. On this data set, HTL~\cite{GeHDS18} reports the state-of-the-art result while SoftTriple outperforms it and increases R@1 by $3\%$. 

\begin{table}[!ht]
\centering
\small
\caption{Comparison on \textit{Cars196} with large embeddings.}\label{ta:cars512}
\begin{tabular}{c|ccccc}
Methods&R@1&R@2&R@4&R@8&NMI\\\hline
HDC~\cite{YuanYZ17}&73.7&83.2&89.5&93.8&-\\
Margin~\cite{ManmathaWSK17}&79.6&86.5&91.9&95.1&69.1\\
HTL~\cite{GeHDS18}&81.4&88.0&92.7&95.7&-\\\hline
SoftMax$_{\mathrm{norm}}$&83.2&89.5&94.0&96.6&69.7\\
SoftTriple&\textbf{84.5}&\textbf{90.7}&\textbf{94.5}&\textbf{96.9}&\textbf{70.1}\\
\end{tabular}
\end{table}

\subsection{Stanford Online Products}

Finally, we evaluate the performance of different methods on the Stanford Online Products (\textit{SOP}) data set~\cite{SongXJS16}. It contains $120,053$ product images downloaded from eBay.com and includes $22,634$ classes. We adopt the standard splitting, where $11,318$ classes are used for training and the rest for test. Note that each class has about $5$ images, so we set $K=2$ for this data set and discard the regularizer. We also increase the initial learning rate for centers from $0.01$ to $0.1$.

We first report the results with $64$ embeddings in Table~\ref{ta:sop64}. In this comparison, SoftMax$_{\mathrm{norm}}$ is $2\%$ better than ProxyNCA on R@1. By simply increasing the number of centers from $1$ to $2$, we observe that SoftTriple gains another $0.4\%$ on R@1. It confirms that multiple centers can help to capture the data structure better. 

Table~\ref{ta:sop512} states the performance with large embeddings. We can get a similar conclusion as in Table~\ref{ta:sop64}. Both SoftMax$_{\mathrm{norm}}$ and SoftTriple outperform the state-of-the-art methods. SoftTriple improves the state-of-the-art by more than $3\%$ on R@1. It demonstrates the advantage of learning embeddings without sampling triplet constraints.

\begin{table}[!ht]
\centering
\small
\caption{Comparison on \textit{SOP}. The dimension is $64$.}\label{ta:sop64}
\begin{tabular}{c|cccc}
Methods&R@1&R@10&R@100&NMI\\\hline
SemiHard~\cite{SchroffKP15}&66.7&82.4&91.9&89.5\\
LiftedStruct~\cite{SongXJS16}&62.5&80.8&91.9&88.7\\
Clustering~\cite{SongJR017}&67.0&83.7&93.2&89.5\\
ProxyNCA~\cite{Attias17}&73.7&-&-&90.6\\\hline
SoftMax$_{\mathrm{norm}}$&75.9&88.8&95.2&91.5\\
SoftTriple&\textbf{76.3}&\textbf{89.1}&\textbf{95.3}&\textbf{91.7}\\
\end{tabular}
\end{table}

\begin{table}[!ht]
\centering
\small
\caption{Comparison on \textit{SOP} with large embeddings.}\label{ta:sop512}
\begin{tabular}{c|cccc}
Methods&R@1&R@10&R@100&NMI\\\hline
Npairs$^*$~\cite{Sohn16}&67.7&83.8&93.0&88.1\\
HDC~\cite{YuanYZ17}&69.5&84.4&92.8&-\\
Margin~\cite{ManmathaWSK17}&72.7&86.2&93.8&90.7\\
HTL~\cite{GeHDS18}&74.8&88.3&94.8&-\\\hline
SoftMax$_{\mathrm{norm}}$&78.0&90.2&\textbf{96.0}&91.9\\
SoftTriple&\textbf{78.3}&\textbf{90.3}&95.9&\textbf{92.0}\\
\end{tabular}
\end{table}

\section{Conclusion}
\label{sec:conclud}
Sampling triplets from a mini-batch of data can degrade the performance of deep metric learning due to its poor coverage over the whole data set. To address the problem, we propose the novel SoftTriple loss to learn the embeddings without sampling. By representing each class with multiple centers, the loss can be optimized with triplets defined with the similarities between the original examples and classes. Since centers are encoded in the last fully connected layer, we can learn embeddings with the standard SGD training pipeline for classification and eliminate the sampling phase. The consistent improvement from SoftTriple over fine-grained benchmark data sets confirms the effectiveness of the proposed loss function. Since SoftMax loss is prevalently applied for classification, SoftTriple loss can also be applicable for that. Evaluating SoftTriple on the classification task can be our future work.

{\small
\bibliographystyle{ieee_fullname}
\bibliography{stri}
}

\end{document}